\title{Direct Preference Optimization with an Offset}
\author{Afra Amini \qquad Tim Vieira \qquad Ryan Cotterell \\
\setlength{\fboxsep}{2.5pt}%
\setlength{\fboxrule}{2.5pt}%
\fcolorbox{white}{white}{
    $\{$\texttt{\href{mailto:afra.amini@inf.ethz.ch}{afra.amini}, }\texttt{\href{mailto:ryan.cotterell@inf.ethz.ch}{ryan.cotterell}}$\}$\texttt{@inf.ethz.ch} \quad
\texttt{\href{mailto:tim.f.vieira@gmail.com}{tim.f.vieira@gmail.com}}
} \\
    {%
\setlength{\fboxsep}{2.5pt}%
\setlength{\fboxrule}{2.5pt}%
\fcolorbox{white}{white}{
    \includegraphics[width=.15\linewidth]{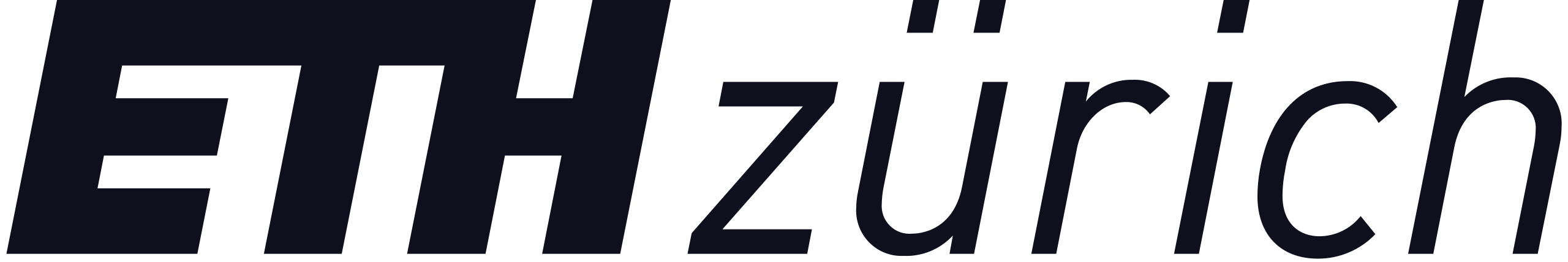}
}
}}
\begin{document}
\maketitle

\begin{abstract}
Direct preference optimization (DPO) is a successful fine-tuning strategy for aligning large language models with human preferences \emph{without} the need to train a reward model or employ reinforcement learning. 
DPO, as originally formulated, relies on binary preference data and fine-tunes a language model to increase the likelihood of a preferred response over a dispreferred response.
However, not all preference pairs are equal. Sometimes, the preferred response is only slightly better than the dispreferred one. In other cases, the preference is much stronger. For instance, if a response contains harmful or toxic content, the annotator will have a strong preference for that response.
In this paper, we propose a generalization of DPO, termed \defn{DPO with an offset (ODPO)}, that does not treat every preference pair equally during fine-tuning. Intuitively, ODPO requires the difference between the likelihood of the preferred and dispreferred response to be greater than an offset value. The offset is determined based on the extent to which one response is preferred over another. 
Our experiments on various tasks suggest that ODPO significantly outperforms DPO in aligning language models, especially when the number of preference pairs is limited. 
\newline
 \newline
 \vspace{0.2em}
  \hspace{.5em}\includegraphics[width=1.25em,height=1.25em]{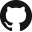}\hspace{.75em}\parbox{\dimexpr\linewidth-2\fboxsep-2\fboxrule}
  {\url{https://github.com/rycolab/odpo}}
\end{abstract}

\section{Introduction}
Reinforcement learning from human feedback (RLHF) is a key building block in training the most modern large language models.
The algorithm aligns the language model's responses to human preferences \citep{instructgpt, llama2, palm2, gpt4}. 
A typical implementation of RLHF is as follows. 
First, humans compare paired responses from a language model, i.e., they determine which of the responses is better, e.g., more helpful or less toxic. 
Second, a reward model is trained to give higher rewards to the responses preferred by humans. 
Lastly, a reinforcement learning algorithm is used to update the model's parameters to maximize the expected reward, while not diverging too much from the model's initial parameters.

The aforementioned implementation of RLHF, however, is tedious for two reasons. 
First, the reward model is usually a model as large as the language model itself, which is expensive to train and store. 
Second, reinforcement learning algorithms are known to be very sensitive to the choice of hyperparameters \citep{zheng2023secrets}, and, thus, hard to tune. 
Therefore, a fruitful line of work attempts to design alternative simpler implementations of RLHF \citep{unlikelihood, quark, slic-hf}.\looseness=-1

\begin{figure*}
    \centering
    \includegraphics[width=\textwidth]{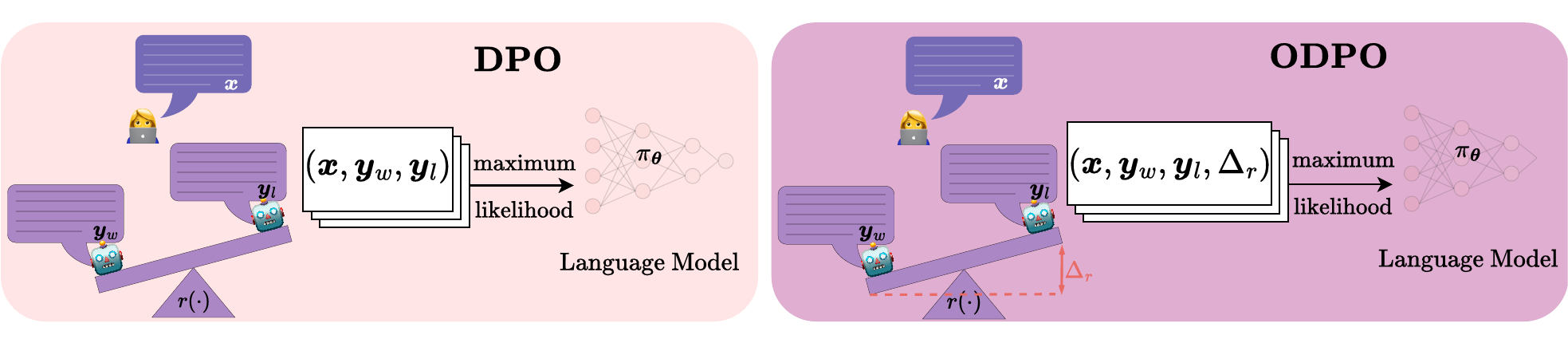}
    \caption{ODPO takes into account the extent to which one output should be preferred over another. The model has to put more probability mass on the preferred output compared to the dispreferred output by an offset that is determined based on how much the winning output is preferred over the losing output.}
    \label{fig:fig1}
\end{figure*}

A highly successful and straightforward alternative to RLHF is \defn{direct preference optimization} \citep[DPO;][]{dpo}. 
Unlike the conventional implementation of RLHF, DPO does not train a reward model and sidesteps the use of any reinforcement learning algorithm. Instead, it fine-tunes the language model's weights to maximize the likelihood of the preference data directly using the Bradley--Terry model \citep{bt}. 
Intuitively, each DPO gradient update to the model's parameters increases the likelihood of the preferred response and decreases the likelihood of the dispreferred response.

However, DPO only takes the ordering between the model's responses into account and not the extent to which one response is preferred over another. 
In many settings, the difference in quality between the two responses is known---either by asking humans \citep{summarizehf, llama2}\footnote{For example, \citet{llama2} ask humans to specify the extent to which they prefer one output over another, i.e., by choosing from: significantly better, better, slightly better, negligibly better / unsure.} or through point-wise scores, e.g., toxicity scores, sentiment scores, code-based execution scores, etc. 
In that context, a natural question is the following: how can we make use of this information to better align language models with human preferences? 

To answer this question, we propose a new method, which we term DPO with an offset (ODPO). ODPO is a generalization of DPO that incorporates the difference between responses when modeling preference data. 
The intuition behind ODPO is simple; it requires the language model to increase the likelihood of the preferred responses compared to the dispreferred responses by an offset that is determined based on the difference between their associated reward values. Therefore, the larger the reward of the preferred response in comparison to the dispreferred response, the higher the likelihood needs to be over the likelihood of the dispreferred response. 
We further show that when the offset is set to zero, ODPO is equivalent to DPO. 
This process is illustrated in \Cref{fig:fig1}.\looseness=-1

In our experiments, we fine-tune language models of various sizes with DPO and ODPO and compare the two methods. 
In cases where a ground-truth reward function is given, we measure two competing metrics: (i) the average reward associated with the generations from the fine-tuned model, and (ii) the KL divergence between the language model before and after the fine-tuning.
Based on the results of sentiment and toxicity control tasks, we observe that ODPO more often appears on the Pareto frontier of reward and KL compared to DPO. 
We then apply ODPO to a summarization task, where we use the scores given by humans on a Likert scale to define an offset between the two summaries. 
We observe that on average and across different sampling temperatures, ODPO results in a higher win rate over human-written summaries compared to DPO.\looseness=-1

\section{Preliminaries}
Given a prompt $\bx \in \alphabet^*$, a language model $\policy$ is a distribution over the responses $\by \in \alphabet^*$, where $\alphabet$ is the alphabet.\footnote{An alphabet is a finite, non-empty set.}
As is standard, we parameterize the conditional distribution over responses given a prompt as an autoregressive language model $\policy(\by \mid \bx)$ parameterized by $\btheta$ as,
\begin{equation}
\begin{aligned}
&\policy(\by \mid \bx) = \\
&\qquad \policy(\eos \mid \by, \bx) \prod_{t=1}^T \policy(y_t \mid \by_{<t}, \bx),
\end{aligned}
\end{equation}
where $\eos \notin \alphabet$ is a distinguished end-of-sequence token. 
To deploy a language model to a downstream task, it is usually necessary to additionally fine-tune it on high-quality data. 
Given a dataset of prompts and desired responses $(\bx, \by) \sim \sftdata$, the standard fine-tuning objective (to be maximized) is simply the log-likelihood of $\sftdata$ 
\begin{align} \label{eq:mle}
    \mlobj(\btheta) = \E_{(\bx, \by) \sim \sftdata}\!\Bigl[ \log \policy(\by \mid \bx) \Bigr].
\end{align}
We use $\piref$ to refer to the language model fine-tuned with this objective on $\sftdata$. 

Such a fine-tuning approach can be effective for solving downstream tasks with language models. 
However, there is a disparity between the maximum-likelihood objective and the goal of the downstream task, which is to leverage the language model to generate high-quality responses, as judged by humans.
This misalignment arises partly because the maximum-likelihood objective, \cref{eq:mle}, fails to distinguish between major and minor errors. 
For instance, when fine-tuning a language model to perform summarization, a significant error might involve hallucination, while a minor one might involve a failure to employ the appropriate synonyms \citep{summarizehf}. 
Training with the maximum-likelihood objective encourages the model to assign nonzero probability mass to all responses in $\sftdata$, even those of lower quality. 

To address this issue, one can optimize the expected reward of language model generations instead of using the maximum-likelihood objective. 
Consider a reward function $\reward(\bx, \by)$, estimated from quality assessments performed by humans.
 The reward function assigns real values to prompt--response pairs; we will discuss in \Cref{sec:reward} how such a reward model can be learned from human feedback. 
 Our objective is to make sure that the responses generated by the language model have high quality. Therefore, we can directly optimize the expected reward of the responses generated by the language model,
\begin{align}
    \rlobj(\btheta) = \E_{\bx \sim \dataset, \by \sim \policy(\cdot|\bx)}\Big[ r(\bx, \by) \Big],
\end{align}
where $\dataset = \{\bx^{(n)}\}_{n=1}^N$ is a multiset of prompts. To prevent reward hacking \citep{amodei2016concrete} and to make sure that we do not diverge too much from the supervised fine-tuned model $\piref$, a regularization term is often added to the objective \citep{summarizehf},
\begin{equation}
\begin{aligned}\label{eq:rlobj}
    \rlobj(\btheta) &= \E_{\bx \sim \dataset, \by \sim   \policy(\cdot|\bx)}\Big[ r(\bx, \by) \Big] \\
    &- \beta\, \kl\Big[\policy(\by \mid \bx) \, \| \, \piref(\by \mid \bx)\Big].
\end{aligned}
\end{equation}
The above objective is optimized using proximal policy optimization \citep[PPO;][]{ppo} or another actor--critic algorithm \citep{pmlr-v48-mniha16, glaese2022improving}.

\section{Reward Modeling}\label{sec:reward}

\paragraph{Pointwise Rewards. }
A key component in \Cref{eq:rlobj} is the task-dependent reward function that assigns pointwise real-valued rewards to each output. 
In many tasks, learning such reward functions is straightforward. 
For example, in open-ended text generation, a desired attribute could be the presence of indicators of positive sentiment in the generated text, while an undesired attribute could be toxicity. 
In such cases, the reward model might take the form of a classifier that assesses responses based on their sentiment or toxicity. 
Similarly, in code generation tasks, the quality of the code can be automatically evaluated, providing another straightforward example.\looseness=-1 

\paragraph{Pairwise Preferences.}
Learning a point-wise reward function for tasks like summarization or dialogue generation is more complex. 
Judging the absolute quality of a summary can depend on several factors, e.g., coherence, faithfulness, and conciseness, which makes it hard to collect human feedback datasets for reward model training.
An effective strategy in such cases is to collect human preferences instead of point-wise judgments \citep{ziegler, wu2021recursively, instructgpt}. 
In this setup, humans are shown two (or more) responses to a prompt $\bx$ and are asked to select the response they prefer. 
Therefore, a datapoint in a human feedback dataset is a triple $(\bx, \by_w, \by_l) \sim \hfdata$, where $\by_w$ is preferred over $\by_l$. 
Given such a dataset, one needs to learn the point-wise reward function. 
It is common to assume that the preference data can be modeled by a Bradley--Terry model \citep{bt},
\begin{subequations}
\begin{align} \label{eq:bt}
    \pbt(&\by_w \succ \by_l \mid \bx)  \nonumber  \\
    &= \frac{\exp\big(\reward(\bx, \by_w)\big)}{\exp\big(\reward(\bx, \by_w)\big) + \exp\big(\reward(\bx, \by_l)\big)} \\
    &= \sigma \big(\reward(\bx, \by_w) - \reward(\bx, \by_l) \big),
\end{align}
\end{subequations}
where $\sigma(x) = \frac{1}{1 + \exp(-x)}$ is the sigmoid function. 
Assuming that $\hfdata \sim \pbt$, we can train a reward model $\rphi$, parameterized by $\bphi$, as a binary classifier that maximizes the following log-likelihood\looseness=-1
\begin{align} \label{eq:loss-reward} 
      \E \Big[\log \sigma \big(\rphi(\bx, \by_w) - \rphi(\bx, \by_l) \big) \Big],
\end{align}
where the expectation is over $(\bx, \by_w, \by_l) \sim \hfdata$.
Intuitively, the reward function should assign higher reward values to the responses that are preferred by humans. 
The estimated reward function is then plugged into \Cref{eq:rlobj} to enable policy estimation.

\begin{figure*}
\begin{subequations}
\begin{align} 
   \dpoloss(\btheta) &= - \E_{(\bx, \by_w, \by_l) \sim \hfdata} \Big[\log \sigma \Big(\beta \log \frac{\policy(\by_w \mid \bx)}{\piref(\by_w \mid \bx)}  - \beta \log \frac{\policy(\by_l \mid \bx)}{\piref(\by_l \mid \bx)} \Big) \Big]  \\
   &= - \E_{(\bx, \by_w, \by_l) \sim \hfdata} \Big[\log \sigma \Big(\rhattheta(\bx, \by_w) - \rhattheta(\bx, \by_l)\Big)\Big] \label{eq:dpo-loss} \\
    \odpoloss(\btheta) &= - \E_{(\bx, \by_w, \by_l) \sim \hfdata} \Big[\log \sigma \Big(\rhattheta(\bx, \by_w) - \rhattheta(\bx, \by_l) - \deltar \Big)\Big] \label{eq:mdpo-loss}
\end{align}
\end{subequations}
\end{figure*}

It is often the case that directly optimizing \Cref{eq:rlobj} is complex and resource-intensive. 
Next, we will elaborate on an alternative method to using RLHF that has demonstrated comparable or even superior effectiveness to PPO.
\section{Direct Preference Optimization}
\citet{dpo} introduce a method to avoid reward model training and, thus, to directly optimize the language model. 
Their method, termed direct preference optimization (DPO) works as follows.
The critical observation is that the optimal solution that maximizes the \Cref{eq:rlobj} is,\looseness=-1
\begin{align}
    &\polstar(\by \mid \bx) = \\
    & \qquad \quad \frac{1}{Z(\bx)}\, \piref(\by \mid \bx) \exp \Big( \frac{1}{\beta} r(\bx, \by) \Big), \nonumber
\end{align}
where 
\begin{equation}
Z(\bx) = \sum_{\by \in \alphabet^*} \piref(\by\,\mid\,\bx) \exp \Big(\frac{1}{\beta} r(\bx, \by) \Big),
\end{equation}
Note that $Z(\bx)$ is, in general, difficult to compute as it involves a sum over a countably infinite set.
Nevertheless, we can write the reward as a function of $\polstar$ by rearranging the terms,
\begin{align} \label{eq:dpo-reward}
    \reward(\bx, \by) = \beta \log \frac{\polstar(\by \mid \bx)}{\piref(\by \mid \bx)} + \beta \log Z(\bx).
\end{align}
Under the assumption that the preference data is well-modeled by the Bradley--Terry model (\Cref{eq:bt}), we substitute the reward in \Cref{eq:loss-reward} with \Cref{eq:dpo-reward}, and formulate the loss function in terms of the language model parameters. Therefore, we directly optimize the language model by maximizing the likelihood of the preference data in \Cref{eq:dpo-loss}. In this equation, $\rhattheta(\bx, \by) = \beta \log \frac{\policy(\by|\bx)}{\piref(\by|\bx)}$ is called the \defn{estimated reward}. Note that the intractable normalization factor $Z(\bx)$ cancels out. 

Simply put, \Cref{eq:dpo-loss} requires the estimated reward for the preferred response to be larger than the estimated reward for the dispreferred response. 
A strong assumption behind the DPO's loss is that the preference data is well-modeled by a Bradley--Terry model. 
However, the Bradley--Terry model only tells us the probability that one response is preferred over another, and not the \emph{extent} to which this preference will hold. 
To enhance DPO, we will introduce our modification to DPO, which generalizes DPO and introduces an offset between the responses in its loss function.

\section{DPO with an Offset}
The intuition behind ODPO is simple: we want the estimated reward for the preferred response to be larger than the estimated reward for the dispreferred response, by an offset that is a function of the actual reward differences assigned to the two responses. 
To begin our exposition of ODPO, we first discuss the connection between the Bradley--Terry model and Gumbel random variables, which we then use to construct the loss function for ODPO.
\begin{figure*}[t]
     \centering
     \begin{subfigure}[b]{0.3\textwidth}
         \includegraphics[width=\linewidth]{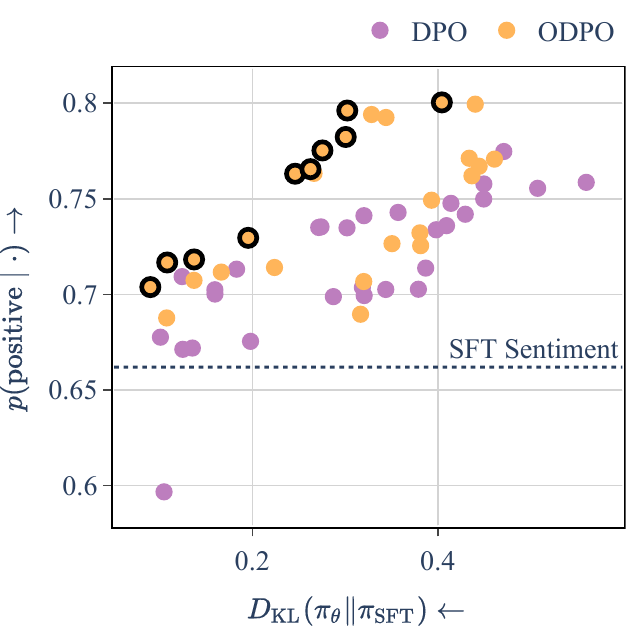}
         \caption{$|\hfdata| = 5000$, $100\%$ of the points on the Pareto front belong to ODPO.} %
     \end{subfigure}
     ~
     \begin{subfigure}[b]{0.3\textwidth}
         \includegraphics[width=\linewidth]{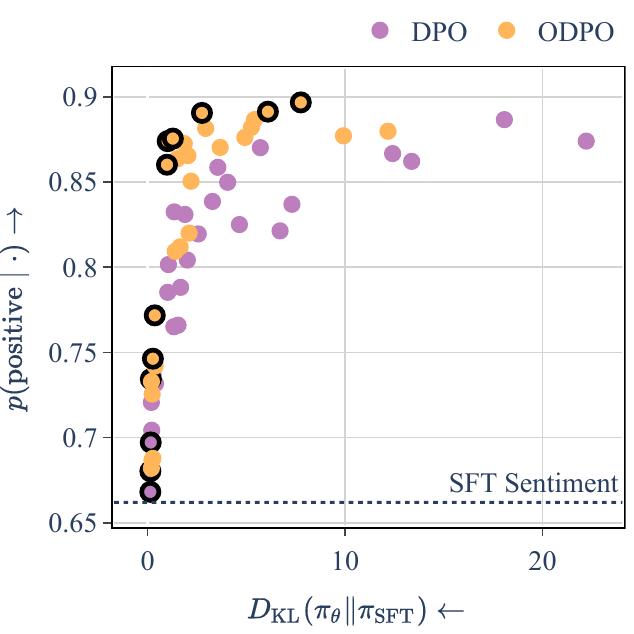}
         \caption{$|\hfdata| = 7500$, $66\%$ of the points on the Pareto front belong to ODPO.} %
     \end{subfigure}
     ~
     \begin{subfigure}[b]{0.3\textwidth}
         \includegraphics[width=\linewidth]{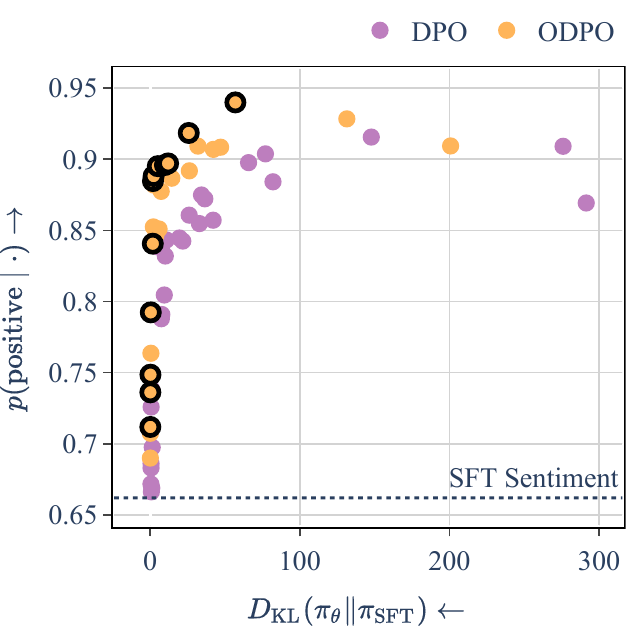}
         \caption{$|\hfdata| \mathbin{=} 10000$, $100\%$ of the points on the Pareto front belong to ODPO.} %
    \end{subfigure}
    \caption{Steering generated movie reviews towards positive sentiment. Points on the Pareto front are highlighted with a black border. We observe that in all $3$ settings, most (if not all) points on the Pareto front belong to ODPO.}
    \label{fig:imdb}
\end{figure*}
\begin{restatable}{theorem}{gumbelbt}\label{theorem:gumbelbt}
Let $\by_w$ and $\by_l$ be two responses to a prompt $\bx$, and let $\rhattheta(\bx, \by_w)$ and $\rhattheta(\bx, \by_l)$ be their associated estimated rewards.
Finally, let $\rtilde_w \sim \gumbel(\rhattheta(\bx, \by_w), 1)$ and $\rtilde_l  \sim \gumbel(\rhattheta(\bx, \by_l), 1)$ be Gumbel random variables.
Then, we have,
    \begin{align}
        \prob(\rtilde_w - \rtilde_l > 0) &= \pbt(\by_w \succ \by_l \mid \bx) \nonumber \\
        &= \sigma(\deltarhat),
    \end{align}
    where $\pbt(\by_w \succ \by_l \mid \bx)$ is a Bradley--Terry model \Cref{eq:bt} parameterized by $\rhattheta(\bx, \by_w)$ and $\rhattheta(\bx, \by_l)$, and $\deltarhat \defeq \rhattheta(\bx, \by_w) - \rhattheta(\bx, \by_l)$ is the difference between the estimated rewards.\looseness=-1
\end{restatable}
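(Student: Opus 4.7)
The identity is the classical Gumbel--max trick restricted to two alternatives, so my plan is to reduce the claim to an elementary one-dimensional integral and evaluate it in closed form. The payoff is that the algebra leaves us with exactly $\sigma(\deltarhat)$, which matches the Bradley--Terry form from \Cref{eq:bt} by inspection.

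First, I would exploit the location--scale structure of the Gumbel law: write $\rtilde_w = \rhattheta(\bx, \by_w) + g_w$ and $\rtilde_l = \rhattheta(\bx, \by_l) + g_l$ with $g_w, g_l \sim \gumbel(0,1)$ independent. Then $\prob(\rtilde_w - \rtilde_l > 0) = \prob(g_l < \deltarhat + g_w)$, so the task only involves standard Gumbel quantities, namely the CDF $F(z) = \exp(-\exp(-z))$ and density $f(z) = \exp(-z)F(z)$. Second, I would condition on $g_w$ to write the target as
\begin{equation*}
\int_{-\infty}^{\infty} F(\deltarhat + g_w)\, f(g_w)\, dg_w,
\end{equation*}
plug in the explicit forms, and apply the substitution $u = e^{-g_w}$. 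After the change of variables the integrand collapses to $\exp(-u(1+e^{-\deltarhat}))$, whose integral over $(0, \infty)$ is $(1+e^{-\deltarhat})^{-1} = \sigma(\deltarhat)$. Identification with $\pbt(\by_w \succ \by_l \mid \bx)$ is then immediate from \Cref{eq:bt}, which parameterizes the Bradley--Terry probability precisely as $\sigma(\rhattheta(\bx,\by_w) - \rhattheta(\bx,\by_l))$.

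The main obstacle is less a technical hurdle than notational bookkeeping: keeping track of the nested exponentials in the integrand until the substitution $u = e^{-g_w}$ makes them collapse into a single exponential. A slicker alternative I would mention in passing is to invoke the classical fact that the difference of two iid standard Gumbel random variables is logistically distributed with CDF $\sigma$; from that viewpoint, $\prob(\rtilde_w - \rtilde_l > 0) = \prob(g_l - g_w < \deltarhat) = \sigma(\deltarhat)$ follows in one line, and the explicit integration sketched above is essentially a proof of that distributional fact.
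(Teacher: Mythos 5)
Your proof is correct and follows essentially the same route as the paper's: both condition on the value of one Gumbel variable and evaluate the resulting density-times-CDF integral via the substitution $u = e^{-m}$ (the paper keeps the location parameters explicit and arrives at the softmax form $\exp(\rhattheta(\bx,\by_w))/\bigl(\exp(\rhattheta(\bx,\by_w))+\exp(\rhattheta(\bx,\by_l))\bigr)$, whereas you center the variables first and land directly on $\sigma(\deltarhat)$; the two computations are identical up to a shift of the integration variable). Your closing remark that the difference of two independent standard Gumbels is logistic is precisely the fact the paper invokes in its next theorem, so it is a fine alternative but not a genuinely different argument.
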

\begin{proof}
The proof is simple and follows directly from the Gumbel-max trick \citep{gumbel-trick}. 
See \Cref{sec:gumbel} for details.
\end{proof}
\begin{theorem}
    Let $\by_w$ and $\by_l$ be two responses for a prompt $\bx$.
    Given a desired offset $\deltar$ between the responses' reward values, 
    \begin{align}
        \prob\big(\rtilde_w - \rtilde_l > \deltar \big) = \sigma(\deltarhat - \deltar).
    \end{align}
    where $\deltarhat \defeq \rhattheta(\bx, \by_w) - \rhattheta(\bx, \by_l)$.
\end{theorem}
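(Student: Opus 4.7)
The plan is to generalize the previous theorem (\Cref{theorem:gumbelbt}) by tracking the threshold $\Delta_r$ instead of setting it to zero. The key fact I would rely on is that the difference of two independent Gumbel random variables with unit scale and locations $\mu_w, \mu_l$ follows a logistic distribution with location $\mu_w - \mu_l$ and scale $1$. This is a standard computation (it can be derived either by convolving the Gumbel densities or, more cleanly, by invoking the same Gumbel-max argument used in the proof of \Cref{theorem:gumbelbt} applied to a translated variable). Either route avoids any new machinery.

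Concretely, I would first let $Z \defeq \rtilde_w - \rtilde_l$ and argue $Z \sim \mathrm{Logistic}(\hat{\Delta}_r, 1)$, where $\hat{\Delta}_r = \rhattheta(\bx,\by_w) - \rhattheta(\bx,\by_l)$. Then I would write out the tail probability using the logistic CDF $F_Z(z) = \sigma(z - \hat{\Delta}_r)$:
\begin{equation}
\prob(Z > \deltar) = 1 - \sigma(\deltar - \deltarhat).
\end{equation}
Finally, I would apply the identity $1 - \sigma(x) = \sigma(-x)$ to obtain $\sigma(\deltarhat - \deltar)$, which is the claimed expression. As a sanity check, plugging $\deltar = 0$ recovers \Cref{theorem:gumbelbt} exactly.

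The only mildly subtle step is establishing the logistic distribution of the difference, and even that is essentially a restatement of what must already have been proved for the $\deltar = 0$ case in \Cref{sec:gumbel}; one can simply reuse that derivation with the event $\{\rtilde_w - \rtilde_l > 0\}$ replaced by $\{(\rtilde_w - \deltar) - \rtilde_l > 0\}$, noting that $\rtilde_w - \deltar \sim \gumbel(\rhattheta(\bx,\by_w) - \deltar, 1)$ since Gumbel distributions are closed under translation of the location parameter. Applying \Cref{theorem:gumbelbt} to this shifted pair immediately yields $\sigma(\deltarhat - \deltar)$. I expect no real obstacle: the entire argument is a one-line shift of the previous theorem, and the main bookkeeping is just keeping the sign of $\deltar$ straight inside the sigmoid.
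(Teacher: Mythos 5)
Your proof is correct and follows essentially the same route as the paper's: both identify the difference of the two unit-scale Gumbels as a logistic random variable centered at $\deltarhat$ and read off the tail probability, the only difference being that the paper passes through the $\tanh$ form of the logistic CDF while you apply $1-\sigma(x)=\sigma(-x)$ directly. Your closing observation---that one can instead translate $\rtilde_w$ by $\deltar$ and invoke \Cref{theorem:gumbelbt} on the shifted pair, using closure of the Gumbel family under location shifts---is a slightly cleaner reduction that the paper does not spell out, but it amounts to the same computation.
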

\begin{proof}
The difference between two independent Gumbel variables $\rtilde_w \sim \gumbel(\rhattheta(\bx, \by_w), 1)$ and $\rtilde_l \sim \gumbel(\rhattheta(\bx, \by_l), 1)$ follows the logistic distribution. Therefore, the probability of the difference between the noisy estimated rewards being greater than $\deltar$ is,
\begin{align} \label{eq:logistic}
    \prob\big(\rtilde_w - \rtilde_l > \deltar \big) = 1 - \calF\big( \deltar \big),
\end{align}
where $\calF(\cdot)$ is the CDF for logistic distribution. The mean of this distribution is $\deltarhat$, and the variance is $1$. Substituting the CDF in \Cref{eq:logistic} with its definition,
\begin{subequations}
\begin{align} 
    \prob\big(\rtilde_w &- \rtilde_l > \deltar\big) = 1 - \calF(\deltar)  \\
    &= \frac{1}{2} - \frac{1}{2} \mathrm{tanh}\Big(\frac{1}{2} (\deltar - \deltarhat) \Big) \\
    &= \frac{1}{2} - \frac{1}{2} \Big(2\sigma(\deltar - \deltarhat) - 1 \Big)\\
    &= 1 - \sigma(\deltar - \deltarhat)  \\
    &= \sigma(\deltarhat - \deltar).
\end{align}
\end{subequations}
\end{proof}
Similar to DPO, we then maximize the likelihood of the preference data, as shown in \Cref{eq:mdpo-loss}.
Intuitively, minimizing the above loss function enforces the estimated reward for the preferred response $\by_w$ to be higher than $\by_l$ by the offset $\deltar$. 
Comparing \Cref{eq:dpo-loss} and \Cref{eq:mdpo-loss}, when $\deltar = 0$, ODPO will be equivalent to DPO.

\paragraph{Designing $\deltar$.} 
We choose to model the offset as a monotonically increasing function $\sFunc(\cdot)$ of the difference between the scores associated with the responses:
\begin{align} \label{eq:margin}
    \deltar = \alpha\, \sFunc\big(\score(\bx, \by_w) - \score(\bx, \by_l) \big).
\end{align}
where $\alpha$ is a hyperparameter that controls the extent to which an offset should be enforced. When $\alpha = 0$, ODPO becomes equivalent to DPO. In tasks where a ground-truth reward model $\reward(\cdot)$ is given we set $\score(\cdot) = \reward(\cdot)$. In other tasks, one can use the quality scores given by humans to responses as $\score$ (see \Cref{sec:summarization}), or ask humans directly for the difference between the quality of the responses \citep{llama2}.  
We ablate $\sFunc$ in \Cref{sec:ablate-f} and $\alpha$ in \Cref{sec:ablate-alpha}. \looseness=-1

\paragraph{Connection to Softmax Margin.} 
We now show how ODPO is connected to softmax margin \citep{gimpel-smith-2010-softmax}.
The idea behind the softmax margin is to augment the softmax with a $\cost$ function, such that high-cost responses get penalized more heavily. 
For two responses per prompt, we maximize
\begin{align} \label{eq:bt-cost}
    \E_{(\bx, \by_w, \by_l)} \Bigl[\log \sigma\Big( \deltarhat - \cost(\by_w, \by_l)\Big)\Bigr].
\end{align}
Importantly, the $\cost$ function is by definition a non-negative function. 
Therefore, ODPO loss \Cref{eq:mdpo-loss} is only equivalent to softmax-margin loss if we restrict $\deltar \geq 0$.
\begin{figure*}
     \centering
     \begin{subfigure}[b]{0.3\textwidth}
         \includegraphics[width=\linewidth]{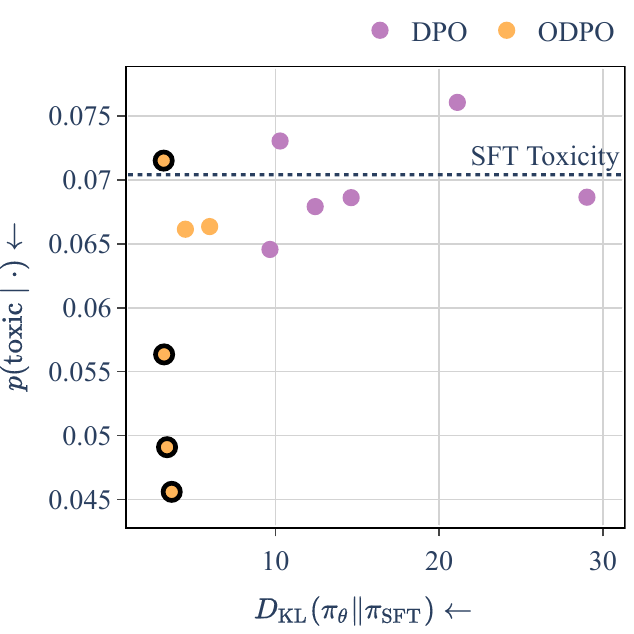}
         \caption{$|\hfdata| = 8000$, $100\%$ of the points on the Pareto front belong to ODPO.} %
     \end{subfigure}
     ~
     \begin{subfigure}[b]{0.3\textwidth}
         \includegraphics[width=\linewidth]{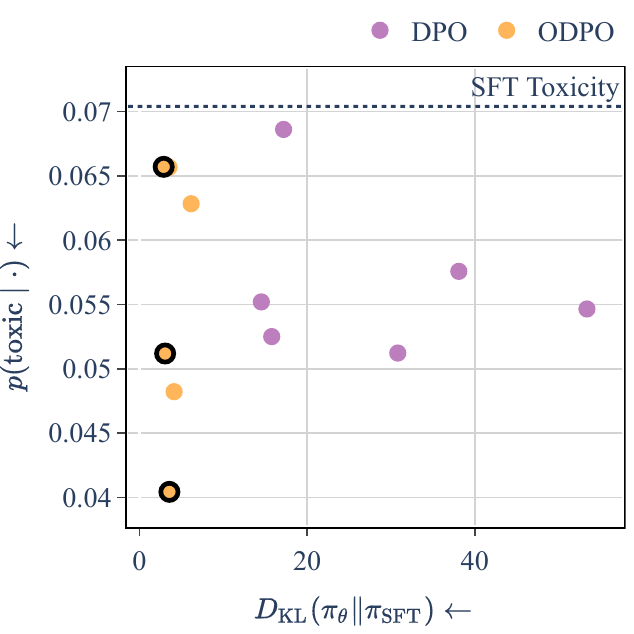}
         \caption{$|\hfdata| = 9000$, $100\%$ of the points on the Pareto front belong to ODPO.} %
     \end{subfigure}
     ~
     \begin{subfigure}[b]{0.3\textwidth}
         \includegraphics[width=\linewidth]{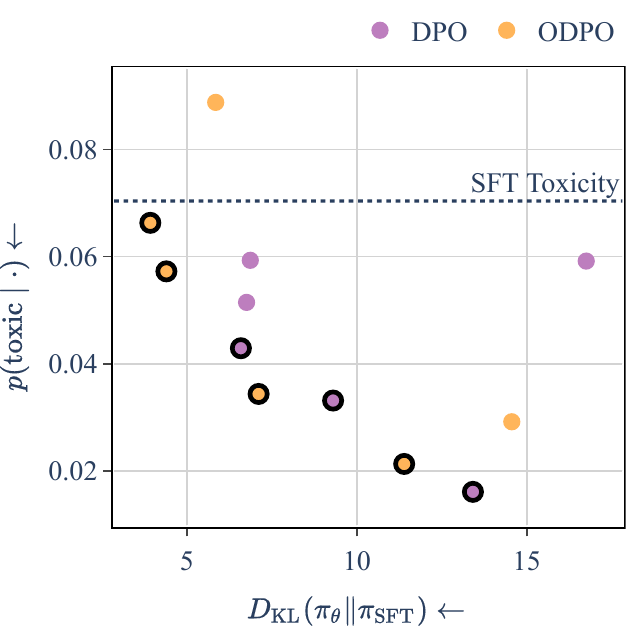}
         \caption{$|\hfdata| = 10000$, $57\%$ of the points on the Pareto front belong to ODPO.} %
    \end{subfigure}
    \caption{Steering generations away from toxic content. We highlight points on the Pareto front with a black border. We observe that, especially when the size of the dataset is small, ODPO manages to reduce the toxicity better than DPO while not diverging too far from the SFT model.
   }
    \label{fig:toxicity}
\end{figure*}
\section{Experiments}
In this section, we empirically compare the performance of ODPO with DPO across different tasks. 
We refer the reader to \Cref{sec:comp-budget} for the computational budget used for conducting the experiments.
\subsection{Sentiment Control}
In this experiment, we steer generations of \texttt{GPT2-Large} \citep{radford2019language} model towards positive sentiment. Following the experimental setup in \citep{dpo}, we fine-tune \texttt{GPT2-Large} on the train split of \textsc{imdb} dataset \citep{imdb} until convergence. This gives us the SFT checkpoint $\piref$. 

\paragraph{Reward Model.} 
We use a state-of-the-art binary sentiment classifier\footnote{Specifically, we use \url{https://huggingface.co/lvwerra/distilbert-imdb}.} with sentiments $\{\pos, \negg\}$ as the reward model.
Concretely if the sentiment of the response is negative, we set $r(\bx, \by)$ to $1-p(\negg \mid \cdot)$, and if the sentiment is positive $r(\bx, \by) = 1 + p(\pos \mid \cdot)$, where $p$ is given by the classifier. For notational ease, we show $\reward(\bx, \by)$ with $\reward(\by)$.\looseness=-1

\paragraph{Bootstraping the Preference Data.} 
DPO, as opposed to RLHF, only works on top of preference data. Therefore, in tasks that a ground-truth reward model is given, the preference data needs to be bootstraped.  We follow \citet{dpo} to bootstrap the preference dataset ($\hfdata$) from point-wise rewards. Given the reward function, which in this experiment is given by the sentiment classifier, and a dataset of prompts that are prefixes of movie reviews, we proceed as follows. First, for each prompt $\bx \in \dataset$, $M$ responses are sampled from the language model $\piref(\cdot \mid \bx)$\footnote{One can also compare the responses generated by the model with human written responses \citep{summarizehf}.} to form a multiset $Y_{\bx} = \{\by_i\}_{m=1}^M$,\footnote{In this experiment, we generate two responses per prompt.} where $\by_m$ is the $m^{\text{th}}$ response generated for prompt $\bx$. Next, for each
 $i \neq j \in \{1, \dots, M\}$ a tuple $(\bx, \by_w, \by_l)$ is added to $\hfdata$ if $\reward(\by_i) \neq \reward(\by_j)$, where\looseness=-1
\begin{align} \label{eq:bootstrap}
    (\by_w, \by_l) = \begin{cases} (\by_i, \by_j) & \textbf{if } \reward(\by_i) > \reward(\by_j) \\
    (\by_j, \by_i) & \textbf{if } \reward(\by_i) < \reward(\by_j)
    \end{cases}.
\end{align}
While DPO only uses the \emph{order} of responses' rewards, with ODPO we further use the reward model to determine the offset between the two responses and set $\deltar = \log \big(\reward(\by_w) -  \reward(\by_l) \big)$. Following \Cref{eq:margin}, this is equivalent to choosing $\sFunc(\cdot)$ as the $\log$ function and setting $\alpha = 1$.

We vary the KL regularization term $\beta$ in $\{0.1, 0.2, \dots, 1\} \cup \{1, 2, 3, 4, 5\}$\footnote{Within the range of 0 to 1, increased resolution is necessary due to considerable variance in results. However, within the range of 1 to 5, the points are closely clustered, therefore, we increase the step size when covering this range.}
 and sample from the fine-tuned language model $\policy$ two times with two different random seeds.
This gives us $28$ different samples for each method. 
We use these samples to approximate the average probability of the generations having positive sentiment as well as the KL divergence between the fine-tuned model $\policy$ and the initial model $\piref$. 
Ideally, we want not to diverge too much from the SFT policy, while generating movie reviews with a positive sentiment.  

To capture the tradeoff between the KL divergence and achieved reward, we evaluate the two methods based on the Pareto frontier of achieved reward and KL divergence. Concretely, we report the percentage of points on the Pareto set that belong to each method. We compare the performance of ODPO and DPO in $3$ different settings by varying the number of data points in the preference dataset, i.e., $|\hfdata|$. As depicted in \Cref{fig:imdb}, in all experimental setups, ODPO is more effective in generating more samples with positive sentiment, while not diverging too far from $\piref$.\footnote{Further experiments comparing SLiC-HF and ODPO show the same trends \Cref{sec:slic}.} The difference between the two methods is more pronounced with smaller datasets, i.e., when $|\hfdata| = 5000$.\footnote{When using more data,  e.g., $|\hfdata|=10000$, we observe the positive sentiment probability gets closer to the maximum value $1.$, while this comes at the cost of diverging too much from $\piref$; we see an order of magnitude larger $\kl$ with $|\hfdata|=10000$ compared to $|\hfdata|=7500$.}
\begin{figure*}
     \centering
     \begin{subfigure}[b]{0.3\textwidth}
         \includegraphics[width=\linewidth]{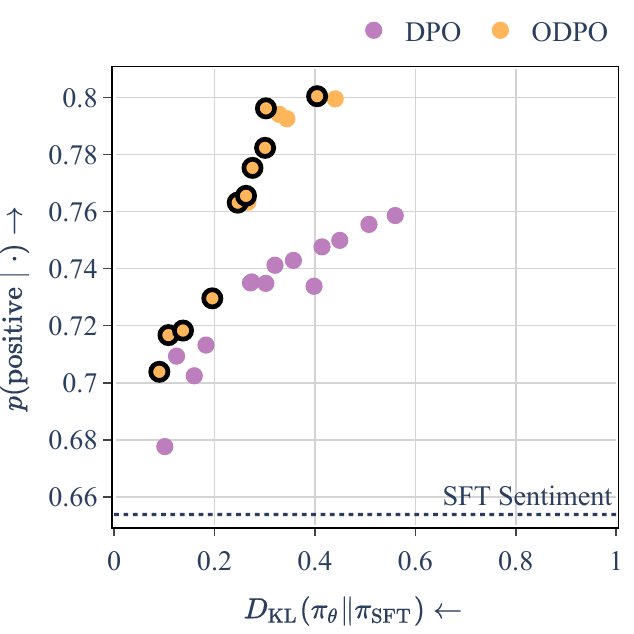}
         \caption{$\deltar = \log \big(\reward(\by_w) - \reward(\by_l)\big)$, $100\%$ of points on Pareto front belong to ODPO.} %
     \end{subfigure}
     ~
     \begin{subfigure}[b]{0.3\textwidth}
         \includegraphics[width=\linewidth]{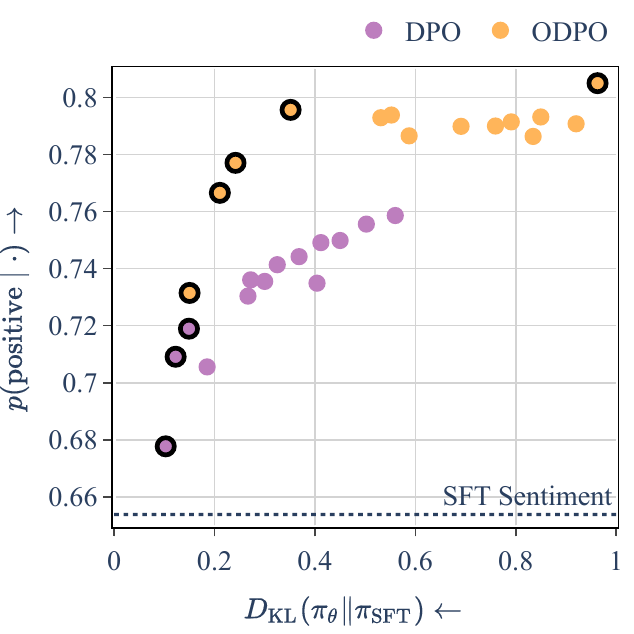}
         \caption{$\deltar = \log \reward(\by_w) - \log \reward(\by_l)$, $63\%$ of points on Pareto front belong to ODPO.} %
     \end{subfigure}
     ~
     \begin{subfigure}[b]{0.3\textwidth}
         \includegraphics[width=\linewidth]{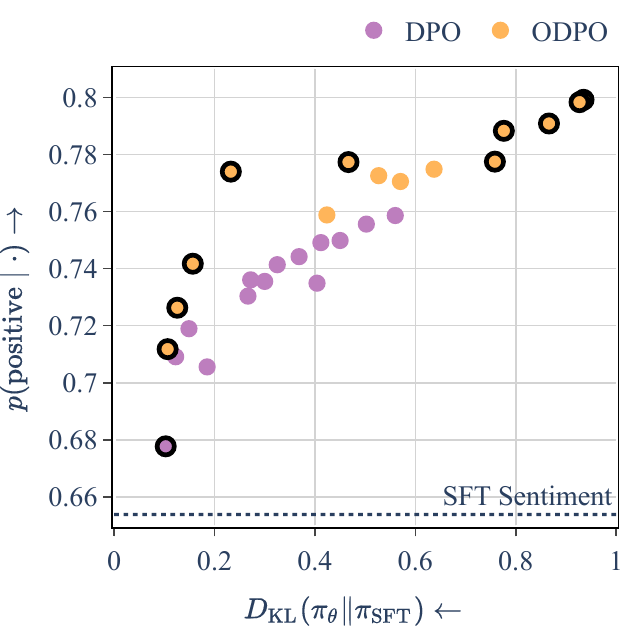}
         \caption{$\deltar = \reward(\by_w) - \reward(\by_l)$, $91\%$ of points on Pareto front belong to ODPO.} %
    \end{subfigure}
    \caption{Investigating the effect of the offset formulation on the performance of ODPO. Scaling the offset with a $\log$ function helps achieve the highest reward values without diverging too much from the SFT model.}
    \label{fig:imdb-cost}
\end{figure*}
\subsection{Toxicity Control}
In this task, our goal is to reduce the toxicity of the generations. We use \texttt{GPT-neo-2.7b} \citep{gpt-neo}\footnote{Specifically, we use \url{https://huggingface.co/EleutherAI/gpt-neo-2.7B}.} as the SFT checkpoint, $\piref$. We adversarially sample the prompts from \textsc{RealToxicityPrompts} \citep{realtoxicityprompts}, where we sample $10000$ prompts that have toxicity scores of more than $0.3$. We generate two preference pairs from $\piref$ for each prompt and compute their rewards using a toxicity classifier.\footnote{We use \url{https://huggingface.co/facebook/roberta-hate-speech-dynabench-r4-target}.} Similar to the previous experiment, we choose the offset as the $\log$ scaled differences between the rewards of preferred and dispreferred responses.

We vary the KL regularization term $\beta \in \{0.05, 0.1, 0.2, 0.3, 0.4, 0.5\}$\footnote{For $\beta > 0.5$, we observe that no toxicity reduction compared to the SFT model; therefore, for this experiment we only test $\beta \leq 0.5$.} and sample from the fine-tuned model $\policy$. We evaluate the samples based on their average toxicity probabilities and the KL divergence between $\policy$ and $\piref$. 

We observe that ODPO significantly outperforms DPO when using a dataset of size $8000$ or $9000$, where all the points on the Pareto front are from ODPO in \Cref{fig:toxicity}. Concretely, in those two setups, DPO fails to meaningfully reduce the toxicity of the generations, as the toxicity scores are all close to the toxicity scores of generations from $\piref$ (shown with a dashed horizontal line). However, ODPO manages to significantly reduce the toxicity of the generations, cutting it almost by half. 
As we increase the size of the dataset, the performance of DPO gets closer to ODPO. 
\begin{figure}
    \centering
    \includegraphics[width=0.95\columnwidth]{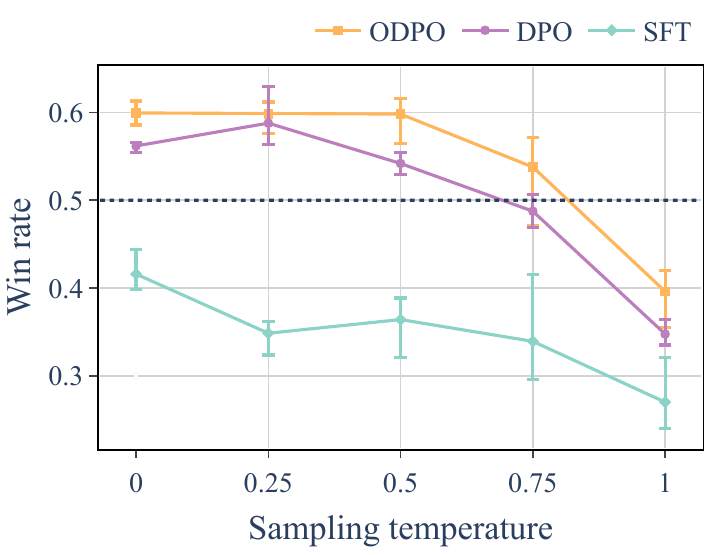}
    \caption{Win rates of generations from models fine-tuned with DPO and ODPO against human-written summaries. On average, ODPO achieves a higher win rate---significantly so in temperatures $0$ and $0.5$.\looseness=-1
    }
    \label{fig:tldr}
\end{figure}
\subsection{Summarization} \label{sec:summarization}
Next, we test our method on a summarization task, where we directly use the scores given by humans to the pair of summaries for designing the offset. We use \textsc{reddit tl;dr} dataset collected by \citet{summarizehf}. We take the subset of this dataset that is assessed by humans for quality using a 7-point Likert scale.\footnote{More specifically, we take the posts under \texttt{relationships} and \texttt{relationship\_advice} subreddits. For more details regarding the datasets used refer to \Cref{sec:dataset}.} 
For each of the two summaries of a given Reddit post, we add a datapoint to $\hfdata$ if the overall quality score of one is greater than the other. 
Moreover, we set the offset between the two summaries as the $\log$ difference between their associated scores by humans. 

The supervised fine-tuned model ($\piref$) is a \texttt{GPTJ-6B} \citep{gpt-j} model that is fine-tuned on human-written forum post summaries.\footnote{Specifically, we use \url{https://huggingface.co/CarperAI/openai_summarize_tldr_sft}} We further fine-tune this model with DPO and ODPO on $\hfdata$.\footnote{Following \citet{dpo}, we set $\beta=0.5$ for both methods.} We use GPT-4 to compare the generations from the fine-tuned models against the reference summary written by humans, as GPT-4 judgments have been shown to strongly correlate with human judgments \citep{dpo}. See \Cref{sec:prompt} for the prompt used for the evaluation.\looseness=-1

We use $100$ test prompts and sample from the fine-tuned models with different temperatures. The win rates of different methods against human written summaries are reported in \cref{fig:tldr}. We observe that both DPO and ODPO improve upon the SFT model. 
At lower temperatures, i.e., $0, 0.25, 0.5$, both methods are on average preferred over human-written summaries. These results are consistent with the results reported in \citep{dpo}. 

Importantly, across all sampling temperatures, the average win rate of ODPO is higher than DPO. Specifically, in temperatures $0$ and $0.5$ ODPO significantly outperforms DPO. We further perform a head-to-head comparison between DPO and ODPO and observe that the win rate of ODPO over DPO is $[51, 50, 62, 48, 57]$ at temperatures $[0, 0.25, 0.5, 0.75, 1.]$ respectively. For qualitative results refer to \Cref{sec:qualitative}.
\section{Ablation Studies}
In this section, we investigate and ablate certain design decisions that were made in the experiments. Specifically, we investigate the effect of the offset hyperparameter $\alpha$, and the scaling function $\sFunc(\cdot)$ on the performance of ODPO. 
\subsection{Ablating the Scaling Function} \label{sec:ablate-f}
In our experiments, we scaled the reward difference between the preferred and dispreferred responses with a $\log$ function. To better understand the effect of this scaling function, we compare it to two other alternatives, (i) using the reward difference without scaling, i.e., setting $\sFunc(\cdot)$ to the identity function, and (ii) using the difference between $\log$ of reward values as the offset. 

We repeat the movie review generation experiment, where the goal is to generate movie reviews with a positive sentiment. For computational efficiency, we choose $|\hfdata| = 5000$. We vary $\beta$ between $\{0.1, 0.2, \dots, 0.9\} \cup \{1, 2, 3, 4, 5\}$ and fine-tune $\policy$ with the aforementioned offsets. 

The results are depicted in \Cref{fig:imdb-cost}. Overall, ODPO outperforms DPO with all three choices of the offset. However, there exist some notable differences between the three offsets. Scaling the offset with the $\log$ function makes the model not diverge too far from $\piref$, while achieving high rewards: While the models trained with $\log$ scaled offset (left plot) achieve high rewards (around $0.8$) with KL of $0.4$, models without $\log$ scaling reach to $0.8$ rewards only when the KL between $\policy$ and $\piref$ is around $1$ (right plot). Scaling each reward value separately lies somewhere in between (middle plot). 
\begin{figure}
    \centering
    \includegraphics[width=0.9\columnwidth]{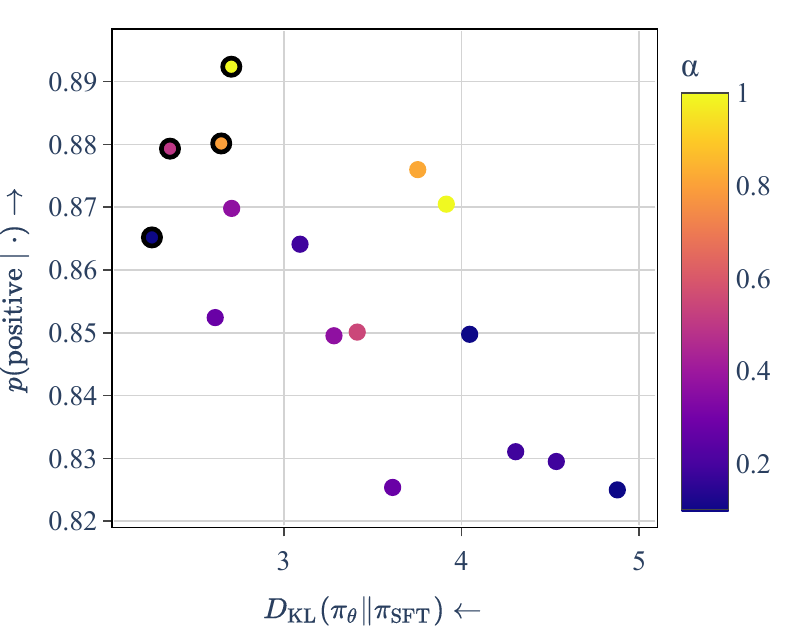}
    \caption{Ablating alpha values for movie review generation. The offset is set to $\deltar = \alpha\, \log \big(r(\by_w) - r(\by_l) \big)$. The highest rewards are associated with models trained with $\alpha=1$, while smaller $\alpha$ leads to lower $\kl$.}
    \label{fig:alpha}
    \vspace{-15pt}
\end{figure}
\subsection{Ablating the Offset Hyperparameter} \label{sec:ablate-alpha}
In the previous experiment, we set $\alpha = 1$. To understand the effect of enforcing an offset in the loss function we experiment with different values of $\alpha$. Repeating the movie review experiment, we vary $\alpha \in \{0.0, 0.1, 0.2, 0.3, 0.5, 0.8, 1.0\}$ and set the offset to $\deltar = \alpha\, \log \big(\reward(\by_w) - \reward(\by_l) \big)$.
Note that when $\alpha=0$ ODPO loss is exactly equal to DPO loss. After fine-tuning $\policy$ on a dataset of size $7500$ with the ODPO loss \Cref{eq:mdpo-loss} (with $\beta = 0.5$), we sample from the models two times with two different random seeds and report the results in \Cref{fig:alpha}. \looseness=-1

We observe that higher values of $\alpha$ can lead to higher reward values at the expense of diverging from the SFT model. Lower values of $\alpha$ on average lead to lower reward values. On the Pareto front of the \Cref{fig:alpha}, points with the highest rewards are associated with models trained with $\alpha=1$, while points with lower $\kl$ are fine-tuned with smaller $\alpha$ values. \looseness=-1
\section{Related Work} \label{sec:related-work}
In this section, we review alternative methods to RLHF for aligning language models to human preferences. 
\paragraph{Sequence-Level Contrastive Methods.}
The intuition that is shared by these methods is simple: they encourage the language model $\policy$ to assign more probability mass to the preferred response compared to the dispreferred response(s). \citet[\textsc{SLiC;}][]{slic-hf} employ a rank calibration loss that requires $\log \policy(\by_w \mid \bx)$ to be greater than $\log \policy(\by_l \mid \bx)$, by an offset $\delta$. Importantly, in their formulation $\delta$ is a hyperparameter and \emph{does not} depend on the the responses $\by_w, \by_l$. Similarly, \citet[\textsc{rrhf;}][]{rrhf} uses a ranking loss without the offset. While intuitive, the objective that contrastive losses are optimizing for, and its connection to the RLHF objective (maximizing the expected reward) is unclear.
\paragraph{DPO and Variants.} DPO proposes an alternative method to optimize the regularized expected rewards without using RL algorithms. Importantly, DPO shares \emph{the objective} with conventional methods for RLHF \citep{summarizehf}, but the optimization is done without training a separate reward model nor using RL algorithms. Since the introduction of DPO, several follow-up studies attempted to improve DPO along different dimensions. To prevent DPO from overfitting to the preference dataset, \citet{ipo} introduce Identity Preference Optimization (IPO). IPO replaces the unbounded function of preference probabilities in DPO loss formulation with the bounded identity function. \citet{kto} propose a method called Kahneman-Tversky Optimisation (KTO) that dispenses the need for paired preference data altogether. KTO's loss function relies on \emph{unpaired} examples that are labeled either as ``good'' or ``bad''. 
\citet{modpo} suggest another variant of DPO specifically for multi-objective alignment.

\paragraph{Our Approach.} ODPO attempts to solve another shortcoming of DPO, which is to treat every preference pair equally and not take into account the extent to which the two responses differ from each other. ODPO's loss requires the estimated reward for the preferred response to be larger than the dispreferred response by an offset that depends on the difference between the quality of the responses. 
\section{Conclusion}
We propose ODPO, a generalization of DPO for aligning language models with human preferences. 
Just as with DPO, ODPO does not rely on a pre-trained reward model and does not require an RL algorithm.
However, in contrast to DPO, ODPO does not treat every preference pair equally and incorporates the extent to which one response should be preferred over another in its loss function. 
Experiments on a variety of tasks suggest that ODPO is more effective than DPO in aligning language models to human preferences.

\section*{Limitations}

\paragraph{Human Preference Data.} 
Not all datasets with human feedback contain judgments regarding the extent to which one response is preferred over another. In our experiments, we focused on tasks and datasets where we had either access to such information or point-wise reward functions. The results presented in the paper provide strong motivation to ask humans to indicate their degree of preference when collecting human feedback data.

\paragraph{Offset values.} Deciding how to scale offset values can depend on the task. In this study, we experimented with offsets based on Likert scores and classifier probabilities. We defer extending ODPO to different tasks for future work.
\section*{Ethical Considerations}
We foresee two main ethical concerns regarding the use of direct preference optimization and, in general, any RLHF method for aligning language models with human preferences. 
First, as with DPO and other RLHF methods, malicious actors can use ODPO to steer the responses to generate harmful or toxic content. However, we must note that we foresee no particular reason for DPO to be more suitable for malicious use cases compared to DPO or other RLHF methods. 
Second, reward functions might inherit unwanted biases from the datasets that they were trained on. Therefore, fine-tuning to increase the expected reward can then inject such biases into the language models’ generations. We view ODPO as a fine-tuning strategy that can work with any given reward function or preference dataset. Therefore, as we develop more accurate and less biased reward functions, or find more inclusive ways of collecting human feedback, less unwanted bias will propagate to the fine-tuning process.\looseness=-1
\section*{Acknowledgements}
We thank Alizée Pace for her feedback on an early version of this paper. Afra Amini is supported by ETH AI Center doctoral fellowship.
\bibliography{custom}

\newpage
\appendix
\onecolumn
\section{Proof of \Cref{theorem:gumbelbt}} \label{sec:gumbel}
\gumbelbt*
\begin{proof}
    We define a random variable $I$ as,
    \begin{equation}
        I \defeq \argmax_{l, w} \{\rtilde_l, \rtilde_w \}
    \end{equation}
    To prove the theorem, it suffices to prove,
    \begin{equation}
        \prob(I = w) = \frac{\exp \big( \rhattheta(\bx, \by_w) \big)}{\exp \big( \rhattheta(\bx, \by_w) \big) + \exp \big( \rhattheta(\bx, \by_l) \big)}
    \end{equation}
    For ease of notation, we define $\gw \defeq \gumbel(\rhattheta(\bx, \by_w), 1)$, $\rhatw \defeq \rhattheta(\bx, \by_w)$, and $\rhatl \defeq \rhattheta(\bx, \by_l)$.
    Then, consider the following manipulation
    \begin{subequations}
    \begin{align}
        \prob (I = w) &= \E_{m \sim \gw} \bigl[ \prob(\rtilde_l < m) \bigr] \\
        &= \int_{-\infty}^{+\infty} \gw(m) \exp \bigl( -\exp(\rhatl-m) \bigr)\, \mathrm{d}m \\
        &= \int_{-\infty}^{+\infty} \exp \bigl(\rhatw -m - \exp(\rhatw - m) \bigr) \exp \bigl( -\exp(\rhatl-m) \bigr)\, \mathrm{d}m \\
        &= \int_{-\infty}^{+\infty} \exp(\rhatw) \exp(-m) \exp \Bigl( -\exp(-m) \bigl(\underbrace{\exp(\rhatw) + \exp(\rhatl}_{\vcentcolon= Z}) \bigr)\Bigr)\, \mathrm{d}m \\
        &= \exp(\rhatw) \int_{-\infty}^{+\infty} \exp(-m) \exp \Bigl( -Z\, \exp(-m)\Bigr)\, \mathrm{d}m \\
        &= \frac{\exp(\rhatw)}{\exp(\rhatw) + \exp(\rhatl)}.
    \end{align}
    \end{subequations}
\end{proof}
\section{Datasets Statistics} \label{sec:dataset}
We used the following datasets for either fine-tuning or prompting language models, which is consistent with the intended use case of the datasets. All the datasets are in English. We refer to the corresponding papers for data collection and postprocessing procedures, as well as the demographics of human annotators.
\paragraph{Sentiment Control.} We train the SFT model on the train set of \textsc{imdb} dataset \citep{imdb}, which consists of $25000$ movie reviews. For fine-tuning with DPO and ODPO, we sample $10000$ prompts from the train set and use the language model to generate continuations. For evaluation, we sample 256 prompts from the test set of \textsc{imdb} dataset.

\paragraph{Toxicity Control.} We sample $10000$ prompts for fine-tuning and $256$ for evaluation from \textsc{RealToxicityPrompts} \citep{realtoxicityprompts}. The dataset is released under Apache-2.0 license.

\paragraph{Summarization.} We take $20000$ posts under \texttt{relationships} and \texttt{relationship\_advice} subreddits in \textsc{reddit tl;dr} dataset \citep{summarizehf} and $100$ posts for evaluation. The dataset is released under a modified MIT license. 

\section{Computational Budget}\label{sec:comp-budget}
For sentiment control experiments, a single fine-tuning and evaluation run takes approximately $20$ minutes on $2$ \texttt{rtx\_4090} GPUs. For toxicity control experiments, a single fine-tuning and evaluation run takes approximately $2$ hours on $2$ \texttt{a100\_40gb} GPUs. For the summarization task, a single fine-tuning and evaluation run takes approximately $15$ hours on $2$ \texttt{a100\_80gb} GPUs. Notably, the reported runtimes are the same for DPO and ODPO, and there is no extra computation cost for ODPO compared to DPO.
\begin{table}[t] 
\centering 
\adjustbox{max width=\textwidth}{%
\begin{tabular}{@{}lcccccc@{}}\toprule
& \multicolumn{2}{c}{$|\hfdata|=5000$} & \multicolumn{2}{c}{$|\hfdata|=7500$} & \multicolumn{2}{c}{$|\hfdata|=10000$} \\
\cmidrule(lr){2-3}
\cmidrule(lr){4-5}
\cmidrule(lr){6-7}
 Method & $\kl(\policy \| \piref) \downarrow$ & Reward $\uparrow$ & $\kl(\policy \| \piref) \downarrow$ & Reward $\uparrow$ & $\kl(\policy \| \piref) \downarrow$ & Reward $\uparrow$ \\ \midrule
 SLiC &  $0.55 [0.52, 0.56]$ & $0.71 [0.71, 0.72]$ & $3.56 [3.24, 3.88]$ & $0.81 [0.81, 0.82]$ & $209.42 [85.94, 333.7]$ & $0.85 [0.84, 0.86]$ \\
 DPO & $0.39 [0.37, 0.4]$ & $0.72 [0.71, 0.73]$ & 4.13 [3.42, 5.27] & 0.83 [0.82, 0.84] & 32.32 [27.57, 36.68] & 0.86 [0.85, 0.86] \\
 ODPO & $\mathbf{0.35} [0.32, 0.39]$ & $\mathbf{0.76} [0.75, 0.78]$ & $\mathbf{3.09} [2.43, 4.04]$ & $\mathbf{0.87} [0.87, 0.88]$ & $\mathbf{18.73} [12.15, 29.65]$ & $\mathbf{0.89} [0.88, 0.89]$ \\
\bottomrule
\end{tabular}}
\caption{Comparing ODPO to SLiC-HF on sentiment control task. In all $3$ experimental setups ODPO achieves lower KL values and higher rewards compared to SLiC.}
\label{tab:slic}
\end{table}
\section{Comparison to SLiC-HF} \label{sec:slic}
We compare DPO and ODPO to SLiC-HF \citep{slic-hf} on the sentiment control task. 
Following \citet{slic-hf}, we set the margin hyperparameter to 1 for SLiC and $\beta$ to $0.5$ for DPO and ODPO. We observe that none of the SLiC runs end up on the Pareto frontier of KL divergence and reward. We report the mean values and $0.9$ confidence intervals for $\kl(\policy \mid\mid \piref)$ and reward in \Cref{tab:slic} for $3$ dataset sizes.

\section{GPT-4 Evaluation Prompt for the Summarization Task} \label{sec:prompt}
Following \citet{dpo} we use the prompt below to evaluate the win rates of generated summaries against human-written summaries. 

\texttt{Which of the following summaries does a better job of summarizing the most important points in the given forum post, without including unimportant or irrelevant details? A good summary is both concise and precise.}

\texttt{Post:}

\texttt{<post>}

\texttt{Summary A:}

\texttt{<summary\_a>}

\texttt{Summary B:}

\texttt{<summary\_b>}

\texttt{FIRST provide a one-sentence comparison of the two summaries, explaining which you prefer and why. SECOND, on a new line, state only "A" or "B" to indicate your choice. Your response should use the format:}

\texttt{Comparison: <one-sentence comparison and explanation>}

\texttt{Preferred: <"A" or "B">
}

\section{Qualitative Results} \label{sec:qualitative}
\begin{table}[ht]
    \centering
    \begin{tabular}{p{3cm}p{11cm}}
    \toprule 
       \textbf{Post}  & We have been dating for two and a half months, but we were friends in high school. We are both divorced, but her kids' dad takes them maybe one day a week. By their bedtime she's too tired to stay up for time for herself. I have more free time because I only have my kids for half the time. 

What can I say to her to help her get over the mentality she has to take care of all her problems alone? I'm willing to invest whatever it takes because we get along so well. Ask me any info that I may have left out.
 \\[0.5cm]
       \textbf{DPO Summary} & dating with benefits is giving my girlfriend anxiety and a mentality of taking care of her problems alone even though she said she wanted a relationship, not a one-man show. Do I need a break-up? Ask me anything.\\ [0.5cm]
       \textbf{ODPO Summary} & Girl is tired when hanging out with me. What can I say to her to help her get over the mentality she has to take care of all her problems alone? We get along so well. Any info that I may have mentioned/left out.\\[0.5cm]
       \textbf{Judgment} & Summary A \textbf{[DPO]} brings up unnecessary details and confusion while Summary B \textbf{[ODPO]} is concise, straightforward and better delivers the post's message. \\ \bottomrule
    \end{tabular}
    \caption{An example of two summaries sampled with temperature 1, and its corresponding judgment from GPT-4. Summaries are shown to the model in random order and are tagged with A or B. For clarity, we add the method in brackets.}
    \label{tab:tldr-temp1}
\end{table}

\begin{table}[ht]
    \centering
    \begin{tabular}{p{3cm}p{11cm}}
    \toprule 
       \textbf{Post}  & I don't really have a question or anything. I just need to vent and maybe reach out to those who have been in similar situations. 

My sister told me yesterday that she's pregnant. At first I was thrilled. But then reality hit me. *I can't have children.*

I'm so happy for my sister and my brother-in-law, I really am. But they weren't planning on having children for at least a few more years. My sister's never really wanted to be a mom. Me, on the other hand, have always lived for it. As a kid when people asked what I want to be when I grow up, I always said, "a mom." 

But, 3 years ago I found out I can never have children. I was devastated but have tried to tough it out, tried to believe it isn't true for years. I'm in a serious relationship and we're planning on getting married in a little over a year and starting to **try** having kids immediately, but we will probably end up adopting. While adopting is wonderful and that thought excites me, I still so badly want to be pregnant one day. 

My sister being pregnant is like a kick in the side. A reminder that I'll never be pregnant. I don't want to be around her, I don't want to be happy for her. I'm jealous. I don't know how to deal with this. I never expected to feel anger towards my sister for getting pregnant. But really, how am I expected to feel about this?

I would never dare tell my sister or anyone else in my family how I feel about this. I wouldn't want to upset anyone and I really don't want to be selfish and make this about me. It's just so hard.
 \\[0.5cm]
       \textbf{DPO Summary} & I have a sister who's pregnant and I feel immensely jealous and sad and I don't know how to deal with my feelings. I don't want to have children.\\ [0.5cm]
       \textbf{ODPO Summary} & My sister is pregnant and I'm jealous and sad even though I'm happy for her and I would love to be pregnant one day. I don't know how to feel. I really just want to vent.\\[0.5cm]
       \textbf{Judgment} & Summary A \textbf{[ODPO]} better captures the overall emotion, situation, and dilemma of the original post, while Summary B \textbf{[DPO]} falsely claims that the poster does not want children, which contradicts the information provided in the original post. \\ \bottomrule
    \end{tabular}
    \caption{An example of two summaries sampled with temperature 0.75, and its corresponding judgment from GPT-4. Summaries are shown to the model in random order and are tagged with A or B. For clarity, we add the method in brackets.}
    \label{tab:tldr-temp0.75}
\end{table}

\begin{table}[ht]
    \centering
    \begin{tabular}{p{3cm}p{11cm}}
    \toprule 
       \textbf{Post}  & We have been dating for two and a half months, but we were friends in high school. We are both divorced, but her kids' dad takes them maybe one day a week. By their bedtime she's too tired to stay up for time for herself. I have more free time because I only have my kids for half the time. 

What can I say to her to help her get over the mentality she has to take care of all her problems alone? I'm willing to invest whatever it takes because we get along so well. Ask me any info that I may have left out.
 \\[0.5cm]
       \textbf{DPO Summary} & My girlfriend has to take care of her kids and her problems because she doesn't have time to do anything for herself. How can I help her?\\ [0.5cm]
       \textbf{ODPO Summary} & My girlfriend of two months is constantly tired from taking care of her kids. How can I help her get over the mentality she has to take care of everything alone?\\[0.5cm]
       \textbf{Judgment} & Summary B \textbf{[ODPO]} is more concise and accurate as it mentions the key point about the poster wanting to help his girlfriend overcome her mentality of taking care of everything by herself, compared to Summary A \textbf{[DPO]} which does not convey this aspect clearly.  \\ \bottomrule
    \end{tabular}
    \caption{An example of two summaries sampled with temperature 0.5, and its corresponding judgment from GPT-4. Summaries are shown to the model in random order and are tagged with A or B. For clarity, we add the method in brackets.}
    \label{tab:tldr-temp0.5}
\end{table}

\begin{table}[ht]
    \centering
    \begin{tabular}{p{3cm}p{11cm}}
    \toprule 
       \textbf{Post}  & I am currently in college and I met this wonderful girl. We have gone on a few dates and everything appeared to be going great. She wanted to taking things quite slow because she was getting over a recent ex boyfriend.

However when I met her roommate she brought up that this girl has a boyfriend who has been acting rather shady towards her. He goes to another college and he has been spending the night "on the couch" at this other girls place frequently. He also has recently been posting on her wall and now I'm rather confused.

I know something is up but I really don't know how to handle this. I would really like to be straight forward with her but she seems dodgy whenever I do. I really am beginning to over think things and I almost feel like I was just there to shove in his face.

I really hate this situation because I feel like I could have a legitimate relationship with this girl.
 \\[0.5cm]
       \textbf{DPO Summary} & Met a girl who has a boyfriend who is spending the night at her place frequently. She has been posting on his wall and I feel like she is hiding something. I don't know how to proceed.\\ [0.5cm]
       \textbf{ODPO Summary} & met girl, she has a boyfriend who is acting shady towards her, she wants to take things slow, I'm confused as to what is going on.\\[0.5cm]
       \textbf{Judgment} & Summary A \textbf{[ODPO]} provides more accurate information about the situation, specifically pointing out that the girl wants to take things slow, and expresses the poster's confusion more effectively, whereas, Summary B \textbf{[DPO]} provides false information about the girl posting on the boyfriend's wall. \\ \bottomrule
    \end{tabular}
    \caption{An example of two summaries sampled with temperature 0.25, and its corresponding judgment from GPT-4. Summaries are shown to the model in random order and are tagged with A or B. For clarity, we add the method in brackets.}
    \label{tab:tldr-temp0.25}
\end{table}

\begin{table}[ht]
    \centering
    \begin{tabular}{p{3cm}p{11cm}}
    \toprule 
       \textbf{Post}  & My best friend and I have been friends for around 5 years now? She and her boyfriend has been together for a few years alr too. She is the kind of person who always put relationship over friendship, and because she's my best friend, I understand her and never said anything. 

I understand when she stop replying me for days when she stays over at her boyfriend's house. I understand when she ask me to leave early because she wanted to spend time with her boyfriend. I understand when she said I'm not allowed to be in contact with her boyfriend.(One time she got angry and ignored me for a month because I called her boyfriend to come over when they had a fight) 

I'm really okay with her being like this. Her boyfriend is an important part of her life and I can accept that. But recently, I can't even get a reply from her. Her text replies are always short and cold. We go to different schools so I don't get to talk/see her everyday. I'm always the one looking for a topic to talk about with her. She only comes looking for me if she and her boyfriend has a fight and she needed somebody to talk to or when her boyfriend is too busy to talk or go out with her. 

I had confronted her about it, but she said that it's not like that, she didn't mean to make me feel this way. But after that it's always back to the same situation and I always end up feeling used and neglected.

Any advice?
 \\[0.5cm]
       \textbf{DPO Summary} & My best friend doesn't reply me anymore and I feel like she doesn't care about me anymore. She always put her boyfriend over me. I'm okay with her being like this, but I'm not okay with feeling like she doesn't care about me anymore. Any advice? \\ [0.5cm]
       \textbf{ODPO Summary} & Best friend of 5 years doesn't reply to my texts anymore and always puts her boyfriend over me. I feel like she doesn't care about me anymore. Any advice?\\[0.5cm]
       \textbf{Judgment} & Summary B \textbf{[ODPO]} is preferred as it encapsulates all the necessary details more concisely and accurately without any unnecessary repetition. \\ \bottomrule
    \end{tabular}
    \caption{An example of two summaries generated with greedy decoding, and its corresponding judgment from GPT-4. Summaries are shown to the model in random order and are tagged with A or B. For clarity, we add the method in brackets.}
    \label{tab:tldr-temp0}
\end{table}

\end{document}